\newtheorem{theorem}{Theorem}
\newtheorem{lemma}{Lemma}
\theoremstyle{remark}
\newtheorem{remark}{Remark}
\title{
A Thermodynamic Theory of Learning I: \\ Irreversible Ensemble Transport and Epistemic Costs
}
\author{Daisuke Okanohara}
\affil{Preferred Networks, Inc.}
\begin{document}
\maketitle

\begingroup
\renewcommand\thefootnote{}
\footnotetext{
This paper is the first part of a planned series entitled
``A Thermodynamic Theory of Learning,'' which develops a
thermodynamic framework for finite-time learning dynamics.
}
\endgroup

\begin{abstract}
Learning systems acquire structured internal representations through training,
yet classical information-theoretic results assert that deterministic
transformations cannot increase information.
This apparent tension raises a fundamental question:
how can learning produce meaningful epistemic structure without violating
information-theoretic limits (e.g., data processing inequalities)?

In this work, we argue that this tension is resolved by recognizing learning as
an inherently finite-time and irreversible process.
We formulate learning at the ensemble level as a transport process in the space
of probability distributions over model configurations.
Within this framework, we introduce an epistemic free-energy functional that
balances objective-driven potential improvement against the loss of ensemble
diversity.

We show that reductions in epistemic free energy accumulated along a learning
trajectory admit a decomposition into reversible contributions associated with
changes in the objective landscape and irreversible contributions quantified by
entropy production.
Crucially, this entropy production reflects finite-time epistemic commitment and
cannot be eliminated by algorithmic optimization alone.

Building on this formulation, we derive an \emph{Epistemic Speed Limit} (ESL),
a finite-time inequality that lower-bounds the irreversible entropy production
required to realize a prescribed ensemble transformation.
The bound is geometric in nature, depending only on the transport distance
between initial and final distributions, and holds independently of the
specific learning algorithm.

Our results establish a thermodynamic theory of learning in which differences
between learning procedures arise not from access to different epistemic
resources, but from how efficiently they manage unavoidable irreversible cost.
This perspective reconciles abstraction with information-theoretic constraints
and highlights finite-time irreversibility as a central organizing principle of
learning dynamics.
\end{abstract}

\section{Introduction}

Learning systems acquire structured internal representations from data.
In modern machine learning, this process enables models to extract abstractions,
discover latent regularities, and generalize beyond observed examples.
At the same time, classical results in information theory assert that
deterministic transformations cannot increase information content.
This apparent tension raises a fundamental question:
how can learning give rise to meaningful structure and abstraction
without contradicting established information-theoretic principles?

A key observation of this work is that learning is inherently a finite-time process:
meaningful learning requires committing to epistemic structure within finite time~\cite{epiplexity2026}.
While asymptotic analyses often focus on the existence or properties of optimal solutions,
real learning systems must operate under strict constraints on time, data, and computation.
Under such constraints, learning dynamics are generally irreversible:
probability mass is compressed, hypotheses are discarded, and epistemic uncertainty is reduced
in ways that cannot be undone without additional cost.
Understanding the nature and consequences of this irreversibility
is therefore essential for a principled theory of learning.

In this work, we adopt a thermodynamic perspective on learning.
We describe the state of a learning system by a probability distribution
over model configurations and model learning as a transport process
in the space of such distributions.
Within this framework, we introduce an epistemic free-energy functional
that balances an objective-driven contribution—captured by a learning objective
such as empirical risk or negative log-likelihood—against epistemic uncertainty
over model configurations.
This formulation provides a unified language for quantifying
both progress in learning and the costs incurred along a learning trajectory.

A central quantity in our analysis is the free-energy reduction,
defined as a bookkeeping quantity that records the cumulative change
in epistemic free energy along a learning trajectory.
Importantly, this quantity specifies how much free energy is reduced,
but not how that reduction is realized dynamically.
To resolve this distinction, we analyze the decomposition of free-energy change
into reversible and irreversible contributions,
with the irreversible contribution corresponding to entropy production.

While the formalism developed here is grounded in thermodynamic concepts,
it is important to clarify the role played by entropy production in learning.
Entropy production should not be interpreted as directly determining
the final performance of a learning system.
Rather, it characterizes the irreversibility of the learning trajectory
under finite-time constraints.
In an idealized regime where learning dynamics can be approximated
as gradient flows of a time-independent free-energy functional
without external driving,
the total free-energy reduction along a learning trajectory
coincides with the total entropy production in this limit.
This regime provides a useful baseline in which the irreversible cost of learning
can be cleanly isolated.

Practical learning algorithms, however, rarely operate under such idealized conditions.
Common training practices—including time-dependent learning-rate schedules,
curriculum learning, and adaptive regularization—
effectively introduce external driving that modifies the free-energy landscape over time.
From this perspective, entropy production does not constrain
which configurations a learning system may ultimately reach,
but instead limits how efficiently a given distributional transformation
can be realized within finite time.
Excessive entropy production is therefore expected to manifest
not as degraded asymptotic performance,
but as instability, sensitivity to hyperparameters,
or poor reproducibility of learning outcomes.

Building on this framework, we derive an \emph{Epistemic Speed Limit} (ESL),
a finite-time inequality that lower-bounds the entropy production required
to transport probability mass between distributions over a given time interval.
Crucially, this bound is geometric in nature:
it depends only on the statistical distance between the initial and final distributions
and the duration of the learning process,
and holds independently of the specific learning algorithm
or form of external driving, including optimally controlled dynamics.
The ESL thus reveals a fundamental constraint on learning dynamics,
clarifying the minimal irreversible cost associated with committing
to epistemic structure under finite-time constraints.

\section{Related Work}

\paragraph{Epiplexity and structural information.}
The concept of epiplexity was recently introduced as a measure of the amount of epistemic structure available to a computationally bounded observer~\cite{epiplexity2026}.
This notion builds on a long tradition of research on minimum description length (MDL)~\cite{rissanen1978mdl,rissanen1989,grunwald2007mdl}
and resource-bounded variants of Kolmogorov complexity~\cite{li_vitanyi},
which characterize structure in terms of compressibility under computational constraints.
In this line of work, epiplexity is fundamentally a static, in-principle quantity:
it quantifies how much epistemic structure is present independently of how learning is actually carried out,
and is typically defined through optimal descriptions or representations.
In contrast, the present work is concerned not with the amount of structure that can in principle be extracted,
but with the irreversible costs incurred when committing to such structure along finite-time learning trajectories. The present work should be viewed as complementary to epiplexity, addressing not availability but realization.

\paragraph{Free energy and variational learning.}
Free-energy formulations play a central role in variational inference and Bayesian learning,
where learning is interpreted as the minimization of a variational free-energy objective~\cite{neal_hinton_1998,wainwright_jordan_2008,bishop2006}.
These approaches provide a continuous, distributional relaxation of discrete model selection
and MDL-based objectives, allowing learning processes to be described as trajectories in probability space.
The present work adopts this free-energy perspective not as a normative learning principle,
but as a dynamical formalism for analyzing learning as a finite-time transport process.
Here, free energy is not primarily interpreted as a measure of subjective belief or uncertainty,
but as a representational tool that enables the decomposition of learning progress
into objective-driven contributions and irreversible entropy production along learning trajectories.

\paragraph{Finite-time constraints and speed limits.}
Fundamental limits on the rate of state transformations have been extensively studied
in nonequilibrium thermodynamics, where speed limits relate achievable transformations
to entropy production and geometric properties of state space~\cite{shiraishi2018speed,ito2024geometric}.
A particularly clear class of results establishes universal lower bounds on dissipation
in terms of geometric distances between states~\cite{PRXspeedlimit}.
The present work can be viewed as an epistemic analogue of these thermodynamic speed limits,
with probability distributions over model configurations playing the role of physical states.
Closely related geometric ideas also arise in optimal transport theory,
which provides natural metrics on spaces of probability distributions~\cite{villani2008optimal}.
The Epistemic Speed Limit derived here applies these finite-time irreversibility principles
to learning dynamics, yielding bounds on the minimal irreversible cost
required to transform distributions over model configurations within finite time.

\paragraph{Learning procedure dependence.}
The strong dependence of learning outcomes on training procedures has long been recognized in practice.
Curriculum learning~\cite{bengio2009curriculum} and knowledge distillation~\cite{hinton2015distillation}
are prominent examples in which structured or gradual training procedures
lead to substantial improvements, even without additional data.
Despite their empirical success, principled theoretical accounts of why learning procedures matter remain limited.
From the perspective developed here, such methods can be interpreted as shaping learning trajectories
so as to improve the efficiency of probability transport in distribution space,
thereby managing the entropy production associated with finite-time learning.
Importantly, entropy production itself is not inherently detrimental:
in non-convex landscapes, stochastic fluctuations—and the associated entropy production—
may be essential for exploration and for escaping unfavorable basins.
The Epistemic Speed Limit instead highlights the cost of \emph{uncontrolled} entropy production,
arising from geometrically inefficient transport.
Within this framework, advanced training strategies can be understood
as allocating entropy production toward necessary exploration
while limiting avoidable irreversibility.

\section{Learning as an Irreversible Thermodynamic Process}
\label{sec:thermo}

In this section, we formulate learning as a finite-time \emph{irreversible}
process at the level of ensembles of model configurations.
The role of thermodynamic language here is explicitly \emph{descriptive} rather
than prescriptive: the quantities introduced below serve as bookkeeping devices
for analyzing ensemble-level learning dynamics.
They do not define a learning objective, nor do they prescribe an optimization
algorithm.

Our goal is to make precise in what sense learning is irreversible, what is lost
along a learning trajectory, and how such losses can be quantified independently
of any particular notion of optimality or generalization.
These considerations will form the foundation for the Epistemic Speed Limit
derived in Section~\ref{sec:esl}.

\subsection{Ensemble-Level Description of Learning}

We describe learning at the ensemble level by a probability distribution
$q_s(\theta)$ over a configuration space $\Theta$, indexed by a rescaled time
parameter $s\in[0,1]$.
An ensemble represents the set of model configurations that can be realized
under variations in initialization, data ordering, stochasticity, or other
training conditions.

This probabilistic description is not intended to represent subjective
uncertainty or Bayesian belief.
Rather, it provides a geometric representation of how learning redistributes
accessible configurations across parameter space as training progresses.

Viewed in this way, learning is generically irreversible.
As training proceeds, probability mass concentrates and regions of
configuration space are abandoned, so that the initial ensemble cannot be
recovered without incurring additional cost.
This ensemble-level irreversibility motivates the transport-based
formulation introduced in the following subsection.

\subsection{Learning as Distributional Transport}

We model learning as a continuous transport of probability distributions
$q_s(\theta)$ over model parameters, indexed by a rescaled time $s\in[0,1]$.
The evolution of $q_s$ is governed by the continuity equation
\begin{equation}
\partial_s q_s + \nabla\cdot(q_s v_s)=0,
\end{equation}
where $v_s(\theta)$ denotes the probability velocity field in parameter space.

As a geometric measure of irreversible ensemble transport,
we define the \emph{epistemic entropy production rate}\footnote{Throughout this paper, the term ``entropy production'' is used in an epistemic and transport-theoretic sense.
Specifically, it refers to the quadratic action associated with probability transport in Wasserstein space.
This notion should be distinguished from the conventional thermodynamic entropy production defined in stochastic
thermodynamics via heat exchange with an external bath.} as
\begin{equation}
\sigma_s
:=
\int q_s(\theta)\,\|v_s(\theta)\|^2\,d\theta,
\qquad
\sigma_s \ge 0.
\end{equation}
The quantity $\Sigma_{0:1}$ will serve as a geometric measure of irreversible
probability transport and forms the basic cost functional underlying the
Epistemic Speed Limit derived below.
\begin{equation}
\Sigma_{0:1}
\;:=\;
\int_0^1 \sigma_s\,ds.
\end{equation}

This quantity coincides with the Benamou--Brenier action associated with the
$2$-Wasserstein distance and measures the minimal irreversible cost required
to transport probability mass between ensemble states.
Throughout this paper, entropy production is understood in this epistemic,
transport-theoretic sense.

\subsection{Fokker--Planck Dynamics as a Representative Model}

In this section, we focus on an idealized regime in which the effective noise temperature $T$ is treated as constant.
This assumption allows a clean dissipation identity and highlights the geometric nature of irreversible ensemble transport.
In practical learning algorithms such as SGD, the effective noise scale may vary over time due to learning-rate schedules
or batch-size changes; such time-dependent driving generically introduces additional terms in the free-energy balance.
However, the Epistemic Speed Limit derived here concerns the minimal geometric cost of probability transport and does not
rely on the detailed temporal profile of $T$.

We use Fokker--Planck dynamics as a representative and analytically tractable model of irreversible ensemble learning.
This choice is made for clarity rather than realism.
The Fokker--Planck equation provides a canonical setting in which irreversibility,
entropy production, and transport geometry can be computed explicitly.

Concretely, we consider a continuous-time approximation of stochastic gradient
descent.
Let $\theta_s$ evolve according to the Langevin equation
\begin{equation}
d\theta_s
=
-\nabla\Phi(\theta_s)\,ds
+
\sqrt{2T}\,dW_s,
\end{equation}
where $\Phi(\theta)$ is a learning objective, $T>0$ sets the scale of stochasticity,
and $W_s$ denotes standard Brownian motion.

This dynamics can be written in continuity form as
\begin{equation}
\partial_s q_s + \nabla \cdot (q_s v_s) = 0,
\end{equation}
with the associated probability-flow velocity field
\begin{equation}
v_s = -\nabla \bigl(\Phi + T \log q_s \bigr).
\end{equation}

Substituting this expression for $v_s$ into the continuity equation
recovers the Fokker--Planck equation below.
\begin{equation}
\label{eq:fp}
\partial_s q_s
=
\nabla\cdot\!\left(q_s \nabla \Phi(\theta)\right)
+
T\,\Delta q_s.
\end{equation}
This dynamics can be interpreted as the Wasserstein gradient flow of the
epistemic free-energy functional introduced below
\cite{jordan1998variational,ambrosio2008gradient}.

We emphasize that Gaussian noise should be understood as a convenient
approximation rather than a literal model of stochastic gradient noise.
The results derived below rely not on Gaussianity itself, but on the more general
facts that learning induces irreversible ensemble transport and that such
transport incurs a geometric cost over finite time.

\subsection{Epistemic Free Energy and Entropy Production}

To analyze ensemble-level learning dynamics, we introduce the epistemic free
energy
\begin{equation}
\label{eq:free_energy}
\mathcal F[q]
:=
\mathbb E_q[\Phi]
-
T H[q],
\qquad
H[q]:=-\int q(\theta)\log q(\theta)\,d\theta.
\end{equation}
The first term measures the ensemble-averaged objective value, while the second
term penalizes concentration of probability mass.
The free energy thus balances objective improvement against loss of ensemble
diversity.

Under Fokker--Planck dynamics, the epistemic free energy $\mathcal F[q_s]$
satisfies a dissipation identity relating its rate of change
to the entropy production rate $\sigma_s$ defined above:
\begin{equation}
\label{eq:fe_dissipation}
\frac{d}{ds}\mathcal F[q_s]
=
-\sigma_s,
\qquad
\sigma_s\ge 0,
\end{equation}
where $\sigma_s$ denotes the instantaneous entropy production rate.
A complete derivation of the dissipation identity,
including all regularity assumptions and integration-by-parts steps,
is provided in Appendix~\ref{app:free_energy_monotonicity}.
Integrating over $s\in[0,1]$ yields
\begin{equation}
\label{eq:total_dissipation}
\mathcal F[q_0]-\mathcal F[q_1]
=
\Sigma_{0:1},
\qquad
\Sigma_{0:1}:=\int_0^1\sigma_s\,ds.
\end{equation}

Equation~\eqref{eq:total_dissipation} expresses a fundamental irreversibility. 
In this special dynamical setting, the decrease in free energy accumulated over
the learning trajectory is exactly accounted for by the entropy production.
More generally, however, entropy production and free-energy change are distinct
quantities: the former depends on the entire learning path, while the latter
depends only on the endpoints.
This identity is dynamical and does not depend on any notion of optimality.

\subsection{Free-Energy Decomposition}

Independently of the dynamics, the free-energy difference between two ensemble
states admits the algebraic decomposition
\begin{equation}
\label{eq:fe_decomposition}
\mathcal F[q_0]-\mathcal F[q_1]
=
\bigl(\mathbb E_{q_0}[\Phi]-\mathbb E_{q_1}[\Phi]\bigr)
+
T\bigl(H[q_1]-H[q_0]\bigr).
\end{equation}
The first term represents the change in the ensemble-averaged objective, while
the second term accounts for the change in entropy along the learning trajectory
(it is negative when entropy decreases).

This decomposition is purely definitional.
In particular, it does not imply that objective improvement and entropy reduction
are causally related.
The role of the free energy is to provide a unified bookkeeping device in which
both contributions appear on equal footing.
Constraints on learning dynamics therefore apply to the \emph{total} free-energy
change, rather than to its individual components.

\subsection{Gaussian Noise as a Representative Approximation}
Gaussian noise is assumed throughout this work for analytical clarity, as it
leads to a Fokker–Planck description with a quadratic entropy production rate
and a natural Wasserstein-2 geometry.
However, Gaussianity is not essential to the conceptual structure of our
results.
Empirical studies indicate that stochastic gradient noise is often heavy-tailed,
in which case the ensemble dynamics are more appropriately described by
Lévy-type or fractional diffusion processes.
While such dynamics modify the precise form of the transport cost and the
associated geometry, they preserve the fundamental features emphasized here:
learning induces irreversible probability transport, the free energy decreases monotonically, and finite-time ensemble transformations incur
unavoidable geometric costs.
We therefore view the Fokker–Planck formulation as a representative model rather
than a literal description of stochastic gradient dynamics.

\subsection{Transport Geometry and Irreversible Cost}

We now relate the entropy production 
to the geometry of probability transport.

Let $(q_s, j_s)$ be a learning trajectory satisfying the continuity equation
\begin{equation}
\partial_s q_s + \nabla\cdot j_s = 0,
\end{equation}
with fixed endpoints $q_0$ and $q_1$.
The associated probability velocity field is defined by
$j_s = q_s v_s$.

A fundamental geometric lower bound on the irreversible cost of such transport
is provided by optimal transport theory.
Specifically, the squared $2$-Wasserstein distance $W_2(q_0,q_1)^2$ defines
a metric on the space of probability distributions over $\Theta$ with finite
second moments and admits the variational representation
\begin{equation}
W_2(q_0,q_1)^2
=
\inf_{(q_s,v_s)}
\int_0^1\!\int q_s(\theta)\,\|v_s(\theta)\|^2\,d\theta\,ds,
\end{equation}
where the infimum is taken over all paths satisfying the continuity equation
with fixed endpoints.
This quantity can be interpreted as the minimal quadratic cost required to
transport probability mass from $q_0$ to $q_1$; see~\cite{villani2008optimal}
for details.

Since the actual learning trajectory $(q_s, v_s)$ is one admissible path in
this variational problem, the entropy production accumulated along learning
is bounded from below by $W_2(q_0,q_1)^2$.
This lower bound expresses a purely geometric and algorithm-independent
constraint on finite-time learning dynamics.

\section{Epistemic Speed Limit}
\label{sec:esl}

Learning is represented as a trajectory in the space of probability
distributions over model configurations, evolving from an initial ensemble
$q_0$ to a terminal ensemble $q_1$ over a finite time horizon.
As discussed in the previous section, finite-time learning dynamics
necessarily incur irreversible costs associated with probability transport.

In this section, we formalize this constraint for a specific and analytically
tractable class of learning dynamics.
Our result is an epistemic analogue of thermodynamic speed limits, which bound
dissipation in terms of geometric distances between states
in nonequilibrium systems~\cite{PRXspeedlimit}.

Specifically, we derive a finite-time lower bound on the entropy production
required to realize a prescribed ensemble transformation under
Fokker--Planck dynamics.
We refer to this bound as the \emph{Epistemic Speed Limit} (ESL).

\begin{theorem}[Epistemic Speed Limit under Fokker--Planck Dynamics]
\label{thm:esl}
Let $q_s(\theta)$ evolve according to the Fokker--Planck equation
\[
\partial_s q_s
=
\nabla\cdot\!\left(q_s\nabla\Phi\right)
+
T\Delta q_s,
\qquad s\in[0,1],
\]
from an initial distribution $q_0$ to a terminal distribution $q_1$.

Let $\Sigma_{0:1}:=\int_0^1\sigma_s\,ds$ denote the entropy production accumulated
along the learning trajectory, where $\sigma_s$ is defined in
Section~\ref{sec:thermo}.
Then the following statements hold:
\begin{enumerate}
\item \textbf{(Free-energy dissipation)}
\begin{equation}
\label{eq:esl_free_energy}
\mathcal F[q_0]-\mathcal F[q_1]
=
\Sigma_{0:1}.
\end{equation}

\item \textbf{(Epistemic Speed Limit)}
\begin{equation}
\label{eq:esl_speed_limit}
\Sigma_{0:1}
\;\ge\;
W_2(q_0,q_1)^2.
\end{equation}

\item \textbf{(Consequence for objective improvement)}
\begin{equation}
\label{eq:esl_objective}
\mathbb E_{q_0}[\Phi]-\mathbb E_{q_1}[\Phi]
\;\ge\;
W_2(q_0,q_1)^2
+
T\bigl(H[q_0]-H[q_1]\bigr).
\end{equation}
\end{enumerate}

Moreover, the bound in \eqref{eq:esl_speed_limit} is tight: equality is achieved for
a constant-speed Wasserstein geodesic between $q_0$ and $q_1$,
although such a trajectory does not necessarily coincide with the Fokker--Planck dynamics
for a fixed objective $\Phi$.
\end{theorem}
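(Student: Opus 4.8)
The plan is to obtain statements 1--3 as direct consequences of results already in hand and to concentrate the real work on the tightness claim. For Part 1, I would simply integrate the instantaneous dissipation identity $\frac{d}{ds}\mathcal F[q_s]=-\sigma_s$ from \eqref{eq:fe_dissipation} (established in Appendix~\ref{app:free_energy_monotonicity}) over $s\in[0,1]$; the fundamental theorem of calculus gives $\mathcal F[q_0]-\mathcal F[q_1]=\int_0^1\sigma_s\,ds=\Sigma_{0:1}$, which is exactly \eqref{eq:esl_free_energy}. No new estimate is needed here.

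For Part 2, the key point is that the Fokker--Planck trajectory $(q_s,v_s)$ with $v_s=-\nabla(\Phi+T\log q_s)$ is one admissible competitor in the Benamou--Brenier variational problem: it satisfies the continuity equation with the prescribed endpoints $q_0,q_1$, and its action is exactly $\int_0^1\!\int q_s\|v_s\|^2\,d\theta\,ds=\Sigma_{0:1}$. Since $W_2(q_0,q_1)^2$ is the infimum of this action over all admissible paths, the inequality $\Sigma_{0:1}\ge W_2(q_0,q_1)^2$ follows immediately. Part 3 is then purely algebraic: inserting the free-energy decomposition \eqref{eq:fe_decomposition} into the equality of Part 1 gives $(\mathbb E_{q_0}[\Phi]-\mathbb E_{q_1}[\Phi])+T(H[q_1]-H[q_0])=\Sigma_{0:1}\ge W_2(q_0,q_1)^2$, and rearranging to isolate $\mathbb E_{q_0}[\Phi]-\mathbb E_{q_1}[\Phi]$ yields \eqref{eq:esl_objective}.

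For the tightness of \eqref{eq:esl_speed_limit}, I would invoke the existence of the constant-speed displacement interpolation $q_s^\star$ joining $q_0$ to $q_1$, guaranteed under standard regularity (finite second moments, $q_0$ absolutely continuous) by McCann's theorem. Along this geodesic the metric speed is constant in $s$ and equal to $W_2(q_0,q_1)$, so its associated velocity field $v_s^\star$ satisfies $\int q_s^\star\|v_s^\star\|^2\,d\theta=W_2(q_0,q_1)^2$ for every $s$; integrating over $[0,1]$ gives action exactly $W_2(q_0,q_1)^2$. Hence $\Sigma_{0:1}=W_2(q_0,q_1)^2$ along this path, so the lower bound is attained within the class of admissible transports.

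The step I expect to demand the most care is justifying the accompanying caveat---that this equality-achieving geodesic is generically not a Fokker--Planck trajectory for the fixed objective $\Phi$. The cleanest route is to contrast the two velocity fields: the geodesic field is a gradient $v_s^\star=\nabla\psi_s$ whose potential evolves by a Hamilton--Jacobi equation and whose speed $\sigma_s$ is \emph{constant}, whereas the Fokker--Planck field $-\nabla(\Phi+T\log q_s)$ is the Wasserstein gradient of the fixed functional $\mathcal F$, along which $\sigma_s$ is generically non-constant (indeed $\frac{d}{ds}\mathcal F=-\sigma_s$ with $\mathcal F$ strictly decreasing at a changing rate). Moreover, Fokker--Planck evolution from $q_0$ is autonomous and so reaches only one particular terminal law, whereas the geodesic endpoint $q_1$ is prescribed arbitrarily; these two facts together show that equality is a feature of the broader transport geometry rather than of the dynamics for any single $\Phi$. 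I would present this as the one genuinely non-bookkeeping ingredient, since it requires distinguishing geodesics from gradient flows rather than merely invoking the Benamou--Brenier infimum.
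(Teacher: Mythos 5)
Your proposal is correct and follows essentially the same route as the paper: Part 1 by integrating the dissipation identity from Appendix~\ref{app:free_energy_monotonicity}, Part 2 by treating the Fokker--Planck trajectory $(q_s,v_s)$ as an admissible competitor in the Benamou--Brenier variational characterization of $W_2(q_0,q_1)^2$, and Part 3 by combining Part 2 with the algebraic decomposition \eqref{eq:fe_decomposition}. The only difference is that you explicitly prove the tightness claim via McCann's constant-speed displacement interpolation and carefully distinguish geodesics from gradient-flow trajectories, whereas the paper merely asserts this in the theorem statement without proof---a worthwhile addition rather than a deviation.
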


\begin{proof}
The free-energy dissipation identity follows from the standard calculation
for the Wasserstein gradient flow of $\mathcal F$.

For the speed limit, recall that the squared Wasserstein distance
$W_2(q_0,q_1)^2$ is defined as the minimum quadratic transport cost over all
admissible probability flows connecting $q_0$ and $q_1$.
Since the actual learning trajectory $(q_s,v_s)$ is one such admissible flow,
the entropy production accumulated along learning must satisfy
\[
\Sigma_{0:1}
=
\int_0^1\!\int q_s(\theta)\,\|v_s(\theta)\|^2\,d\theta\,ds
\;\ge\;
W_2(q_0,q_1)^2.
\]
The final inequality follows by combining the Epistemic Speed Limit with
the algebraic decomposition of the free energy.
\end{proof}

\begin{remark}[Finite-time scaling of the Epistemic Speed Limit]
\label{rem:esl-finite-time}
Throughout this work, learning trajectories are parametrized by a normalized
time variable $s\in[0,1]$.
To relate the Epistemic Speed Limit to a physical training time
$\tau\in[0,\mathcal{T}]$, consider the time reparametrization
$\tau = \mathcal{T}s$.
Under this change of variables, the continuity equation remains invariant,
while the probability velocity field rescales as
$v_\tau = (1/\mathcal{T})\,v_s$.

As a consequence, the entropy production accumulated over physical time satisfies
\[
\Sigma_{0:\mathcal{T}}
=
\int_0^{\mathcal{T}}\!\int q_\tau(\theta)\,\|v_\tau(\theta)\|^2\,d\theta\,d\tau
=
\frac{1}{\mathcal{T}}\,\Sigma_{0:1}.
\]
This scaling shows that, for a fixed ensemble transformation,
the minimal irreversible cost diverges as the available physical training
time $\mathcal{T}$ decreases.

Combining this scaling with the free-energy dissipation identity
$\mathcal F[q_0]-\mathcal F[q_1]=\Sigma_{0:\mathcal T}$
yields the corresponding finite-time bound on free-energy reduction:
\begin{equation}
\label{eq:fe_drop_speed_limit}
\mathcal F[q_0]-\mathcal F[q_1]
\;\ge\;
\frac{1}{\mathcal T}\,W_2(q_0,q_1)^2.
\end{equation}

This inequality should be interpreted as a geometric lower bound determined by the available learning time.
\end{remark}

\section{Discussion}

The Epistemic Speed Limit (ESL) provides a unified perspective on learning
efficiency by reframing learning not in terms of attainable outcomes, but in
terms of the irreversible epistemic cost required to realize them within finite
time.
By separating epiplexity as an available epistemic resource from the entropy production
incurred during learning, the ESL establishes that any finite-time learning process necessarily incurs
irreversible epistemic entropy production.
As a consequence, learning outcomes can differ across learning procedures—not
because they access different information, but because they differ in how
efficiently they convert available epistemic structure into useful potential improvement.

\subsection{Learning Efficiency and Irreversible Cost}

From the perspective of the ESL, learning corresponds to a trajectory in the
space of probability distributions over model configurations.
Different learning procedures induce different trajectories, which may incur
substantially different amounts of irreversible entropy production even when they
achieve comparable final performance.

Learning procedures that follow geometrically inefficient trajectories incur
large entropy production in stabilizing internal representations.
In contrast, procedures that follow smoother or more direct trajectories can
realize comparable learning outcomes at lower irreversible cost.
Differences in learning efficiency therefore reflect differences in entropy production
rather than differences in accessible information or epistemic resources.

While the Epistemic Speed Limit constrains the irreversible cost of learning trajectories, improved transport efficiency
does not necessarily imply improved generalization performance.
In some regimes, solutions that are far from initialization may incur higher transport cost yet exhibit superior
generalization.
The ESL therefore constrains the cost of realizing ensemble transformations, rather than prescribing which endpoints
are desirable from a performance perspective.

Importantly, the ESL does not propose new learning algorithms.
Rather, it provides a principled explanation for why improvements in learning
often arise from reducing unnecessary irreversible cost, rather than from
accessing additional information.

\subsection{Curriculum Learning, Distillation, and Teacher Guidance}

Curriculum learning, knowledge distillation, and teacher guidance admit a
natural interpretation within the ESL framework.
These methods do not increase epiplexity or introduce new information.
Instead, they reshape learning trajectories so as to avoid unnecessary
irreversible entropy production.

Curriculum learning guides the learner along smoother paths in distribution
space, reducing abrupt shifts that would otherwise require large entropy
production.
Similarly, knowledge distillation and teacher guidance constrain the learning
trajectory, preserving epistemic flexibility and reducing entropy production during
learning.
Their empirical effectiveness follows from their ability to guide learning
closer to quasi-reversible trajectories.

\subsection*{Learning Speed and Quasi-Static Limits}

The finite-time form of the Epistemic Speed Limit also clarifies the role of
learning speed.
As the available training time $\mathcal T$ increases, the minimal entropy
production required to realize a given ensemble transformation decreases,
vanishing in the quasi-static limit $\mathcal T\to\infty$.
In this idealized regime, learning can in principle proceed in an
approximately reversible manner.

The ESL therefore does not prohibit low-dissipation learning.
Rather, it quantifies the fundamental trade-off between learning speed and
irreversible cost under finite-time constraints.
In practical settings, where training time, computational resources, and
environmental stability are limited, learning necessarily incurs nonzero
entropy production.
Excess dissipation then reflects geometrically inefficient learning
trajectories rather than unavoidable thermodynamic cost.

\subsection{Continual Learning and Irreversible Commitment}

The perspective offered by the Epistemic Speed Limit also clarifies why
subsequent learning after convergence is often difficult.
Once learning has progressed to a terminal ensemble, the distribution over
model configurations is typically highly concentrated, corresponding to a
low-entropy state.
This concentration reflects an irreversible epistemic commitment: probability
mass has been transported away from large regions of configuration space, and
those regions become effectively inaccessible under finite-time constraints.

Importantly, what is lost through learning is not information in the
information-theoretic sense, but reachability.
A low-entropy ensemble does not imply that alternative solutions do not exist in
principle, but rather that reaching them from the current ensemble requires
large probability transport.
Under the Epistemic Speed Limit, such transport necessarily incurs substantial
irreversible entropy production when attempted within finite time.

From this viewpoint, the difficulty of re-learning or adapting to new objectives
is not primarily a consequence of insufficient data or model capacity.
Instead, it arises because the ensemble has already undergone a geometrically
costly redistribution.
If the configurations that would reduce a new objective lie far outside the
support of the current ensemble, the Wasserstein distance between the current
and desired distributions is large, and rapid adaptation becomes intrinsically
expensive.

Conversely, ensembles with higher entropy retain broader support in parameter
space, making them closer, in a transport-geometric sense, to a wider range of
potential future objectives.
Such ensembles admit lower-cost adaptation, not because they encode more
information, but because they preserve reachability.
This observation highlights that entropy should not be viewed as something to be
maximized or minimized in isolation.
Rather, entropy mediates a trade-off between objective-driven improvement and
future adaptability.

The Epistemic Speed Limit thus reframes the stability--plasticity dilemma as a
geometric trade-off.
Excessive concentration leads to efficient short-term optimization but incurs
large transport costs for subsequent learning, while excessive dispersion slows
objective-driven progress.
Effective learning procedures implicitly balance these competing demands by
shaping learning trajectories that reduce unnecessary irreversible cost while
maintaining sufficient distributional flexibility for future adaptation.

\subsection{Implications for Intelligence Growth}

The following discussion should be understood as a conceptual implication rather than a concrete bound on any specific
learning system.
Here, ``intelligence growth'' refers abstractly to the accumulation of epistemic structure over finite time, rather than
to any particular operational metric or task performance.

Beyond specific learning procedures, the ESL implies a fundamental constraint on
the growth of intelligence over finite time.
Even if epiplexity is unbounded in principle—through increasing data, model
capacity, or representational richness—the realization of epistemic structure
necessarily incurs irreversible entropy production.

This does not preclude rapid improvements in learning performance.
Rather, it constrains the efficiency with which such improvements can be
realized.
Claims of instantaneous or arbitrarily explosive intelligence growth implicitly
assume vanishing entropy production, infinite learning time, or a breakdown of the
assumptions underlying irreversible learning dynamics.

Viewed through this lens, intelligence is shaped not only by the availability of
epistemic structure, but by the history of irreversible epistemic commitments
made during learning.
The Epistemic Speed Limit thus highlights finite-time efficiency and
irreversibility as central organizing principles of learning and intelligence.

\section{Conclusion}

In this work, we have argued that learning should be understood not only in
terms of what epistemic structure is available in principle, but in terms of the
irreversible cost required to realize such structure through finite-time
dynamics.
While epiplexity characterizes the epistemic structure accessible under
computational and representational constraints, actual learning necessarily
involves irreversible commitment to specific model configurations, accompanied
by unavoidable entropy production.

To make this distinction explicit, we introduced a free-energy formulation in
which learning is represented as a transport process over ensembles of model
configurations.
Within this framework, reductions in free energy can be algebraically decomposed into changes in the
ensemble-averaged objective and changes in entropy.
Irreversible entropy production, by contrast, is a path-dependent quantity
that accounts for the free-energy reduction realized along finite-time learning
trajectories.
This decomposition highlights a central fact: meaningful learning over finite
time cannot be realized without incurring an irreversible epistemic cost.

Our main theoretical result, the Epistemic Speed Limit (ESL), formalizes this
observation as a finite-time law.
The ESL does not bound learning outcomes or restrict what can be learned in
principle.
Instead, it lower-bounds the irreversible entropy production required to realize
a prescribed ensemble transformation within a given time horizon.
This bound is geometric in nature, determined by the transport cost between
initial and final ensemble distributions, and holds independently of the
specific learning algorithm employed.

The Epistemic Speed Limit also clarifies why further learning after apparent
convergence is often intrinsically difficult.
Once a learning process has completed, the ensemble over model configurations is
typically highly concentrated, corresponding to a low-entropy state.
This concentration reflects an irreversible epistemic commitment: probability
mass has been transported away from large regions of configuration space, which
thereafter become effectively unreachable under finite-time constraints.

Crucially, what is lost through learning is not information in the
information-theoretic sense, but reachability.
Alternative configurations that would support different objectives may still
exist in principle, yet adapting to them from a concentrated ensemble requires
large distributional transport.
Under the Epistemic Speed Limit, such transport necessarily incurs substantial
irreversible cost.
From this perspective, the difficulty of continual learning and re-adaptation
arises not from insufficient data or model capacity, but from the irreversible
history of epistemic commitments accumulated during prior learning.

From this perspective, differences between learning procedures arise not because
they access different epistemic resources, but because they manage unavoidable
irreversible cost with differing efficiency.
Procedures such as curriculum learning, knowledge distillation, and teacher
guidance do not circumvent information-theoretic limits.
Rather, they reshape learning trajectories so as to reduce unnecessary entropy
production, allowing a larger fraction of available epistemic structure to be
converted into realized potential improvement.

More broadly, the ESL implies a fundamental constraint on learning and
intelligence over finite time.
Even if epistemic structure is abundant or unbounded in principle, its
realization necessarily incurs irreversible entropy production.
Claims of instantaneous or arbitrarily explosive intelligence growth therefore
implicitly assume vanishing dissipation, infinite learning time, or a breakdown
of the assumptions underlying irreversible learning dynamics.

Viewed in this way, learning and intelligence are governed not solely by the
availability of information or representational capacity, but by finite-time
efficiency constraints imposed by irreversible epistemic commitment.
The Epistemic Speed Limit thus identifies entropy production and transport
geometry as central organizing principles of learning dynamics, complementing
classical information-theoretic accounts with a fundamentally dynamical
perspective.

\bibliographystyle{plain}
\bibliography{references}


\appendix
\section{Monotonicity of the Epistemic Free Energy}
\label{app:free_energy_monotonicity}

In this appendix, we provide a complete derivation showing that the epistemic
free-energy functional decreases monotonically along the Fokker--Planck learning
dynamics and that its rate of decrease coincides with the epistemic entropy
production rate defined in the main text.

\begin{lemma}[Free-energy dissipation along Fokker--Planck dynamics]
\label{lem:free_energy_dissipation}
Let $q_s(\theta)$ be a smooth probability density evolving on $s\in[0,1]$
according to the Fokker--Planck equation
\begin{equation}
\partial_s q_s
=
\nabla\cdot\!\left(q_s \nabla \Phi(\theta)\right)
+
T\,\Delta q_s,
\qquad T>0,
\label{eq:fp_appendix}
\end{equation}
where $\Phi(\theta)$ is a time-independent objective function.
Define the epistemic free-energy functional
\begin{equation}
\mathcal F[q]
:=
\int q(\theta)\,\Phi(\theta)\,d\theta
-
T\,H[q],
\qquad
H[q]:=-\int q(\theta)\log q(\theta)\,d\theta.
\label{eq:free_energy_appendix}
\end{equation}
Assume that $q_s$ is strictly positive and decays sufficiently fast at infinity
(or satisfies appropriate boundary conditions) so that all integrations by parts
are justified.
Then the free energy satisfies
\begin{equation}
\frac{d}{ds}\mathcal F[q_s]
=
-\sigma_s
\le 0,
\label{eq:free_energy_decay}
\end{equation}
where
\begin{equation}
\sigma_s
:=
\int q_s(\theta)\,\|v_s(\theta)\|^2\,d\theta,
\qquad
v_s(\theta):=-\nabla\!\left(\Phi(\theta)+T\log q_s(\theta)\right),
\label{eq:entropy_production_appendix}
\end{equation}
is the epistemic entropy production rate.
Consequently,
\begin{equation}
\label{eq:free_energy_integrated}
\mathcal F[q_0]-\mathcal F[q_1]
=
\int_0^1 \sigma_s\,ds.
\end{equation}
\end{lemma}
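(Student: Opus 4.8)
The plan is to differentiate $\mathcal{F}[q_s]$ directly in $s$ and show that the resulting integrand collapses, after a single integration by parts, into the manifestly nonnegative quantity $\sigma_s$. Writing $\mathcal{F}[q]=\int q\,\Phi\,d\theta + T\int q\log q\,d\theta$ and passing $\partial_s$ under the integral sign, I would use $\partial_s(q_s\log q_s)=(\log q_s+1)\partial_s q_s$ to obtain
\[
\frac{d}{ds}\mathcal{F}[q_s]
=
\int (\partial_s q_s)\bigl(\Phi + T\log q_s + T\bigr)\,d\theta .
\]
The constant $T$ multiplies $\int \partial_s q_s\,d\theta$, which vanishes because the continuity equation conserves total mass; this cancellation of the $+1$ is the first small point to record carefully.

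Next I would introduce the potential $\mu_s:=\Phi + T\log q_s$, which is precisely the first variation $\delta\mathcal{F}/\delta q$ up to the additive constant just discarded, and note that the probability-flow field satisfies $v_s=-\nabla\mu_s$ by definition. Substituting the continuity equation in the form $\partial_s q_s=-\nabla\cdot(q_s v_s)$ gives $\frac{d}{ds}\mathcal{F}[q_s]=-\int \mu_s\,\nabla\cdot(q_s v_s)\,d\theta$. Integrating by parts and discarding the boundary flux under the stated decay hypotheses yields $\int (q_s v_s)\cdot\nabla\mu_s\,d\theta$, and replacing $\nabla\mu_s=-v_s$ produces $-\int q_s\|v_s\|^2\,d\theta=-\sigma_s$. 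Nonnegativity of $\sigma_s$ is immediate from $q_s>0$, giving the monotonicity claim; integrating over $s\in[0,1]$ then yields the endpoint identity $\mathcal{F}[q_0]-\mathcal{F}[q_1]=\int_0^1\sigma_s\,ds$.

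The main obstacle is analytic rather than algebraic: the three formal steps—differentiation under the integral, the vanishing of $\int\partial_s q_s\,d\theta$, and the vanishing boundary term in the integration by parts—all require control of $q_s$, $\log q_s$, and their spatial derivatives at infinity. The lemma's hypotheses (strict positivity and sufficiently fast decay, or suitable boundary conditions) are exactly what license these manipulations, so I would indicate explicitly where each is used: strict positivity so that $\log q_s$ and $\nabla\log q_s=\nabla q_s/q_s$ are well defined, and the decay condition so that the flux $\mu_s\,q_s v_s$ contributes nothing at the boundary. The one identity worth verifying in passing is that $\nabla\cdot(q_s\nabla\mu_s)$ reproduces the given Fokker--Planck right-hand side $\nabla\cdot(q_s\nabla\Phi)+T\Delta q_s$, which confirms that $v_s=-\nabla\mu_s$ is consistent with the continuity equation; this is a short computation using $q_s\nabla\log q_s=\nabla q_s$.
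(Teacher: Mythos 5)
Your proof is correct and follows essentially the same route as the paper's: differentiate $\mathcal F[q_s]$ along the flow, integrate by parts under the stated positivity and decay hypotheses, and identify the resulting integrand with $-q_s\|v_s\|^2 = -\sigma_s$, then integrate in $s$. The only organizational difference is that you first recast the Fokker--Planck equation in continuity form $\partial_s q_s = -\nabla\cdot(q_s v_s)$ with $v_s = -\nabla\mu_s$, $\mu_s = \Phi + T\log q_s$, and perform a single integration by parts against $\mu_s$ (handling the constant $+T$ via mass conservation), whereas the paper substitutes the drift and diffusion terms separately, performs two integrations by parts, and recombines them using $\nabla q_s = q_s\nabla\log q_s$ --- the same computation in a slightly different order.
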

\begin{proof}
We differentiate the free-energy functional along the trajectory $q_s$.
Using $\frac{d}{ds}\int q\Phi=\int \Phi\,\partial_s q$ and
$\frac{d}{ds}\int q\log q=\int (1+\log q)\,\partial_s q$, we obtain
\begin{equation}
\frac{d}{ds}\mathcal F[q_s]
=
\int \bigl(\Phi + T(1+\log q_s)\bigr)\,\partial_s q_s\,d\theta.
\label{eq:dFds_step1}
\end{equation}
Substituting the Fokker--Planck equation~\eqref{eq:fp_appendix} yields
\begin{align}
\frac{d}{ds}\mathcal F[q_s]
&=
\int \bigl(\Phi + T(1+\log q_s)\bigr)
\Bigl[
\nabla\cdot(q_s\nabla\Phi)
+
T\,\Delta q_s
\Bigr]\,d\theta.
\label{eq:dFds_step2}
\end{align}
We treat the two terms separately.
By integration by parts and the assumed boundary conditions,
\begin{align}
\int \bigl(\Phi + T(1+\log q_s)\bigr)\,\nabla\cdot(q_s\nabla\Phi)\,d\theta
&=
-\int q_s\,(\nabla\Phi+T\nabla\log q_s)\cdot\nabla\Phi\,d\theta,
\label{eq:dFds_term1}
\\
T\int \bigl(\Phi + T(1+\log q_s)\bigr)\,\Delta q_s\,d\theta
&=
-T\int (\nabla\Phi+T\nabla\log q_s)\cdot\nabla q_s\,d\theta.
\label{eq:dFds_term2}
\end{align}
Using $\nabla q_s=q_s\nabla\log q_s$ and combining the two expressions, we obtain
\begin{equation}
\frac{d}{ds}\mathcal F[q_s]
=
-\int q_s(\theta)\,
\|\nabla\Phi(\theta)+T\nabla\log q_s(\theta)\|^2\,d\theta.
\label{eq:dFds_step3}
\end{equation}
Introducing the velocity field
$v_s(\theta):=-\nabla(\Phi(\theta)+T\log q_s(\theta))$,
this expression reduces to
\begin{equation}
\frac{d}{ds}\mathcal F[q_s]
=
-\int q_s(\theta)\,\|v_s(\theta)\|^2\,d\theta
=
-\sigma_s,
\end{equation}
which proves~\eqref{eq:free_energy_decay}.
Integrating over $s\in[0,1]$ yields~\eqref{eq:free_energy_integrated}.
\end{proof}

\end{document}